\newtheorem{proposition}{Proposition}
\newtheorem{corollary}{Corollary}
\title{Block Circulant Adapter for Large Language Models}
\author{
Xinyu Ding \and
Meiqi Wang \and
Siyu Liao \And
Zhongfeng Wang \\
\affiliations
Sun Yat-sen University \\
\emails
dingbai1357718507@gmail.com,
wangmq53@mail.sysu.edu.cn,
liaocs2008@gmail.com,
wangzf83@mail.sysu.edu.cn
}
\begin{document}
\maketitle

\begin{abstract}
Fine-tuning large language models (LLMs) is difficult due to their  huge model size. Recent Fourier domain-based methods show potential for reducing fine-tuning costs. 
We propose a block circulant matrix-based fine-tuning method with a stable training heuristic to leverage the properties of circulant matrices and one-dimensional Fourier transforms to reduce storage and computation costs.
Experiments show that our method uses $14\times$ less number of parameters than VeRA, $16\times$ smaller than LoRA and $32\times$ less FLOPs than FourierFT,  while maintaining close or better task performance.
Our approach presents a promising way in frequency domain to fine-tune large models on downstream tasks.
\end{abstract}

\section{Introduction}

Large language models (LLMs) have been applied to serve as foundation models for many applications \cite{cheng2024prompt} because of their outstanding performance in various tasks.
Different from many traditional deep learning applications, these models are trained in an unsupervised fashion on large scale of data.
The huge data volume also indirectly forces LLMs to perform unsupervised training, since collecting human labels can be expensive and slow. 
Starting from the classical BERT model \cite{DBLP:conf/naacl/DevlinCLT19}, LLMs develops into GPT model \cite{GPT}, which inspires many modern LLMs like the LLaMA model \cite{llama}. 

Given a well trained LLM, it is not directly applicable to downstream tasks.
As a task is often highly customized (e.g., write a story based on some keywords), there is some need to further fine-tune the model to better fit the application. 
In general, there are three ways to fine-tune a LLM, i.e., full fine-tuning, partially fine-tuning and prompt tuning. 
Full fine-tuning means training all parameters of the LLM on downstream task data, but it can be challenging due to the huge computation resource cost.  
Partially fine-tuning is training a small part of parameters, which takes much less memory, computation time, and power consumption. 
Prompt tuning is to add more description (e.g., some task examples) in input so that LLM can learn during inference time. 
Such capability is also called in context learning. 
However, prompt tuning can be difficult as it is unclear how LLM understands the prompt. 
For example, changing the task examples amount or order can significantly affect the model performance \cite{lu2022fantastically}. 
Given the high cost of full fine-tuning and unknown mechanism underlying prompt tuning, this paper studies efficient methods to partially fine-tune LLMs.

It should be noted that partially finetuning method is also called parameter-efficient fine-tuning (PEFT) method in the literature.
Many PEFT methods tend to freeze LLM parameters and train on added and task dependent parameters, which are called adapters \cite{adapter}. 
For example, low rank adapter (LoRA) by \cite{hu2021lora} factorizes weight matrices into trainable low rank components.
Ladder side tuning (LST) method \cite{lst} inserts flexible and parameterized modules that help reduce memory consumption. 
The state-of-the-art Fourier domain based fine-tuning (FourierFT) by \cite{DBLP:conf/icml/GaoWCLWC024} utilizes highly sparse parameter matrix in Fourier domain to construct weight matrix. 

Adapters can be categorized into mergeable and non-mergable adapters. 
Mergeable adapters can be merged into the LLM after training, since their parameters are in the same shape as LLM parameters. 
The merging process is often an summation operation that adds up original LLM parameters and adapter parameters, e.g., like the LoRA method. 
As a result, these adapters do not incur extra inference cost after training.
Other adapters like LST method does not have this advantage.
Instead, without the parameter shape restriction, they are more flexible in the adapter architecture design.

In this paper, we focus on the adapter design that can be merged into LLM after training. 
Given the recent success of Fourier domain based method, the result of significantly small number of training parameters is promising. 
We notice that the FourierFT method uses 2D FFT operations which can be very expensive in terms of computation cost. 
In contrast, previous success of circulant structure in deep learning \cite{cheng2015exploration} with 1D FFT operation seems to be able to further reduce the computation cost of Fourier domain based fine-tuning method. 
However, we find that circulant structure based training method can diverge in LLMs.
Through theoretical analysis and empirical experiments, we propose block circulant matrix based  adapter with a special training heuristic to help model converge in practice. 
Overall, the \textbf{contribution} of this paper can be summarized as follow:

We propose the Block Circulant Adapter (BCA), a novel and efficient parameter-efficient fine-tuning approach for fine-tuning LLMs. BCA leverages the structure of block circulant matrices and FFT operations to significantly reduce both storage consumption and computation  complexity.  
This approach not only ensures stable and convergent training by mitigating the gradient explosion risk associated with block circulant matrix-based linear layers but also achieves a "win-win" scenario in terms of efficiency and scalability for LLMs.

We are the first to theoretically analyze and demonstrate that the gradient explosion risk associated with block circulant matrix-based linear layers can be effectively mitigated by a heuristic of learning rate adjustment, ensuring stable and convergent training processes for BCA, which is crucial for practical application of LLM adapters.

Extensive experiments on standard NLP tasks and datasets substantiate the effectiveness of our BCA method. We demonstrate that BCA not only matches the performance of prior works like LoRA and FourierFT but also achieves this with substantially lower computational costs and storage costs, highlighting the advantages of our approach for LLM fine-tuning.

\section{Related Work}
In the domain of LLMs, the evolution of Parameter-Efficient Fine-Tuning methods has marked a significant shift towards enhancing model performance without the computational burden of full-parameter tuning. 
Besides the capability of mergeable or not into LLMs, PEFT methods are also classified into additive, selective, reparameterized, and hybrid fine-tuning approaches in the literature \cite{Xu2023ParameterEfficientFM}. 
Additive fine-tuning methods, such as adapters \cite{adapter} and soft prompts \cite{Wang2023MultitaskPT}, introduce additional trainable parameters into the model architecture, allowing for efficient task-specific tuning without modifying the pre-trained parameters. 
Selective fine-tuning methods focus on updating a subset of the model's parameters, thereby reducing the computational overhead \cite{perifanis2024sftc}. 
Reparameterized fine-tuning methods, such as low-rank adapter \cite{hu2021lora} and its derivatives \cite{dettmers2024qlora}, involve constructing a low-dimensional reparameterization of the original model parameters for training, which is then transformed back to maintain inference speed. 
Hybrid fine-tuning methods \cite{zhou2024autopeft} combine the advantages of different PEFT approaches to build a unified model that balances efficiency and performance.

In parallel, the exploration of the Fourier domain for fine-tuning LLMs has opened new avenues, especially with studies indicating that LLMs leverage Fourier domain features in certain tasks \cite{zhou2024pre}. 
For example, FourierFT by \cite{DBLP:conf/icml/GaoWCLWC024} significantly reduces the number of trainable parameters while maintaining or even improving performance across various tasks. 
Existing Fourier-based tuning methods, have not yet resolved the issue of memory consumption due to the necessity of weight matrix generation through 2D Fast Fourier Transform. 
This limitation has spurred the need for a more efficient method that can leverage the Fourier domain without incurring excessive memory and computation costs.

However, Fourier domain based training method can be traced back to compressing computer vision  models \cite{cheng2015exploration,ding2017circnn}. 
The circulant structure is utilized due to its connection to FFT operation.
It is found that circulant structure falls into the family of low displacemnt rank matrix which is proven applicable in deep learning \cite{zhao2017theoretical}.
These methods are further generalized into displacement rank learning \cite{thomas2018learning,zhao2017theoretical} of structure matrices.
Later, it gradually develops into the design of structure matrices to enable powerful feature  representations \cite{dao2022monarch}.

\begin{figure*}[bt]
\centering
\includegraphics[width=0.85\textwidth]{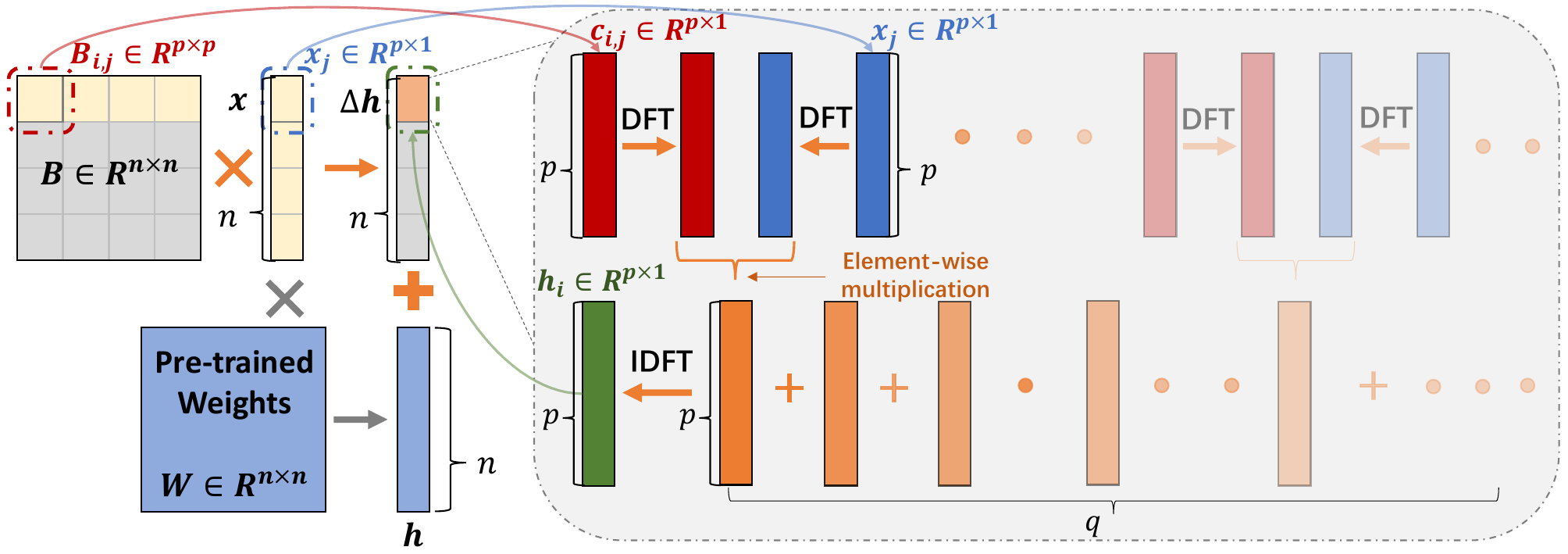}
\caption{
Illustration of block circulant adapter.
$\mathbf{h}$ is the output from pre-trained weight matrix $\mathbf{W}$.
$\Delta\mathbf{h}$ is the output from fine-tuned weight change matrix $\Delta\mathbf{W}$ which in this case is the block circulant matrix $\mathbf{B}$.
The summation of $\mathbf{h}$ and $\Delta\mathbf{h}$ form the output of the fine-tuned model.
After training, the block circulant matrix can be directly added to pre-trained weight matrix such that no extra inference cost is incurred. 
}
\label{fig:block_circ_forward}
\end{figure*}

\section{Preliminary}
In this section, we briefly introduce the concept of circulant matrix and block circulant matrix.
The related matrix vector multiplication algorithm is also shown via using fast fourier transform (FFT) algorithm. 
It should be noted that the corresponding back-propagation \cite{lecun1988theoretical} formulation is skipped in this paper, since auto-differentiation \cite{paszke2017automatic} has been widely adopted in modern deep learning frameworks.

\subsection{Circulant Matrix}
Given a vector $\mathbf{c}=\{c_i\}_{i=0}^{n-1}\in\mathbb{R}^{n\times 1}$, the circulant matrix $circ(\mathbf{c})\in\mathbb{R}^{n\times n}$ can be determined by:
\begin{equation}
\label{eq:circ}
circ(\mathbf{c}) = 
\begin{bmatrix}
    c_0 & c_{n-1}  & \dots  & c_1 \\
    c_1 & \ddots  & \ddots & \vdots \\
    \vdots & \ddots & \ddots & c_{n-1} \\
    c_{n-1} & \dots & c_1  & c_0
\end{bmatrix}.
\end{equation}
For matrix vector multiplication, there exist a fast multiplication algorithm \cite{oppenheim1999discrete} for circulant matrix utilizing fast fourier transform:
\begin{equation}
\label{eq:circ_matvec}
circ(\mathbf{c})\mathbf{x}
=
\text{IFFT}(\text{FFT}(\mathbf{c}) \circ \text{FFT}(\mathbf{x})),
\end{equation}
where $\mathbf{x}\in\mathbb{R}^{n\times 1}$ is an input vector, IFFT is the inverse fast frourier transform, and $\circ$ is the element-wise product, respectively.
In addition, circulant matrix can also be written as a matrix polynomial:
\begin{align}
\label{eq:polynomial}
\begin{split}
\mathbf{P}
&=
\begin{bmatrix}
    0 & 0 & \dots & 0  & 1 \\
    1 & 0 & \ddots &   & 0 \\
    0 & \ddots & \ddots & \ddots & \vdots \\
    \vdots & & \ddots & \ddots & 0 \\
    0 & \dots & 0 & 1  & 0
\end{bmatrix}
,
\\
circ(\mathbf{c})
&=
c_0\mathbf{I}
+
c_1\mathbf{P}
+
\dots
+
c_{n-1}\mathbf{P}^{n-1},
\end{split}
\end{align}
where $\mathbf{P}\in\mathbb{R}^{n\times n}$ is a cyclic permutation matrix.

\subsection{Block Circulant Matrix}
A block circulant matrix is a block matrix with each block being a circulant matrix. 
For the simplicity of implementation, the block circulant matrix in deep learning community is often an equally partitioned matrix \cite{ding2017circnn}. 
More specifically, let $\mathbf{B}\in\mathbb{R}^{n\times n}$ be a block circulant matrix with partition size $p$ such that $n/p=q$.
Then, each row and each column contain $q$ submatrices in shape $p\times p$ that is denoted by $\mathbf{B}_{i,j}$, where $i$ and $j$ are integers from $0$ to $q-1$, respectively. 
According to Eq. (\ref{eq:circ}), assume that each submatrix $\mathbf{B}_{i,j}$ is defined by the corresponding vector $\mathbf{c}_{i,j}\in\mathbb{R}^{p\times 1}$. 
Similarly, let vector $\mathbf{x}$ be partitioned into $q$ subvectors with each subvector $\mathbf{x}_j\in\mathbb{R}^{p\times 1}$.
Denote the matrix vector multiplication result by column vector $\mathbf{h}\in\mathbb{R}^{n\times 1}$.
Apply the same partition into $\mathbf{h}$ such that $\mathbf{h}_i\in\mathbb{R}^{p\times 1}$.
The block matrix vector multiplication can be then written as:
\begin{align}
\label{eq:blockcirc_matvec}
\begin{split}
\mathbf{h} 
&=
\mathbf{B}\mathbf{x}
= \{\mathbf{h}_i\}_{i=0}^{p-1}
,
\\
\mathbf{h}_i
&=
\sum_{j=0}^{q-1}
\mathbf{B}_{i,j}\mathbf{x}_j
\\
&=
\sum_{j=0}^{q-1}
\text{IFFT}(\text{FFT}(\mathbf{c}_{i,j}) \circ \text{FFT}(\mathbf{x}_j))
\\
&=
\text{IFFT}(\sum_{j=0}^{q-1}
\text{FFT}(\mathbf{c}_{i,j}) \circ \text{FFT}(\mathbf{x}_j)),
\end{split}
\end{align}
where the IFFT computation is combined into single computation for each $\mathbf{h}_i$ \cite{liao2019circconv}.

It should be noted that block circulant matrix is a more flexible and general representation of circulant structure. 
When $p=n$, there is only one block, and block circulant matrix is a single circulant matrix. 
When $p=1$, block circulant matrix becomes a general dense matrix.
Therefore, block circulant matrix connects circulant with dense matrix by adjusting the partition size $p$.

\begin{figure*}[htp]
\centering
\includegraphics[width=0.85\textwidth]{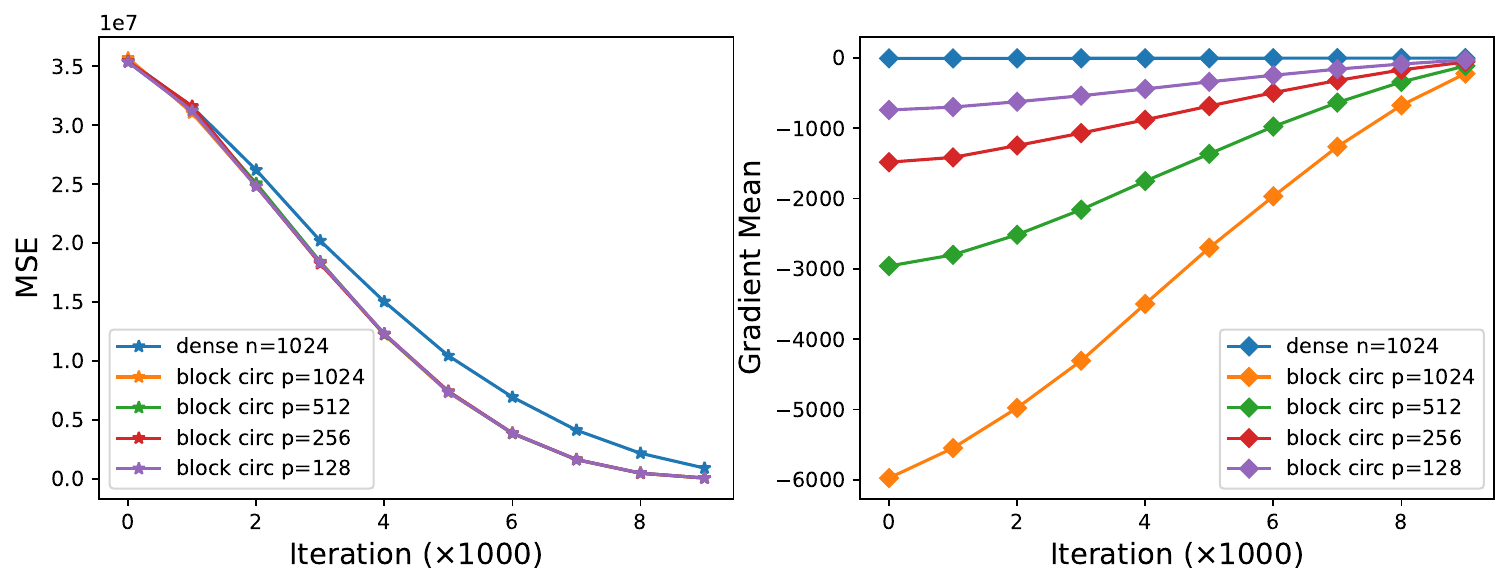}
\caption{
Gradient of single layer neural network with $n=1024$ for dense matrix and $p\in\{128, 256, 512, 1024\}$ for block circulant matrix.
Dense matrix can be seen as a special case of block circulant matrix with block size $p=1$.
When $p=n$, the block circulant matrix becomes a circulant matrix. 
MSE curve shows all simulated training processes converge. 
Gradient mean value curve demonstrates that gradient of block circulant matrix is proportional to block size setting $p$.
}
\label{fig:least_squre}
\end{figure*}

\section{Method}
There are many explorations of applying circulant structure to deep learning model  \cite{cheng2015exploration,DBLP:journals/corr/MoczulskiDAF15,ding2017circnn,thomas2018learning,liao2019circconv}.
In practice, it is found that training block circulant structure can occasionally diverge, especially when $p$ or $n$ becomes large. 
To the best of our knowledge, we for the first time show the reason why circulant structure can diverge sometimes with the first order derivative based optimizing approaches, which are widely adopted in the community.

In this section, we start with theoretical proof of the gradient explosion risk of block circulant matrix.
Next, we simulate training block circulant matrix on a single layer neural network.
It can be empirically observed that block circulant matrix results in much larger gradients than dense matrix. 
Last, we provide a simple but effective solution to ensure a stable training process for learning block circulant matrix. 

\subsection{Derivative of Circulant Structure}
Modern deep learning model optimization methods mainly focus on the first order derivative.
It can be proved that the first order derivative of block circulant matrix can explode in the sense that the corresponding gradient value is proportional to $p$. 
However, different from the gradient explosion effect like in recurrent neural network \cite{bengio1994learning}, the large gradient of block circulant matrix does not affect the back-propagation process for updating other neural network layers. 

\begin{proposition}
Given a vector $\mathbf{c}\in\mathbb{R}^{n\times 1}$ and an input vector $\mathbf{x}\in\mathbb{R}^{n\times 1}$, 
let $\mathbf{h}=\text{circ}(\mathbf{c})\mathbf{x}$ and $f(\mathbf{h}):\mathbb{R}^{n\times 1}\rightarrow \mathbb{R}$ be a differentiable function. 
The first order derivative of $\mathbf{c}$ has a bounded bilinear form. 
\end{proposition}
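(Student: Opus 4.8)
The plan is to differentiate through the matrix--polynomial representation in Eq.~(\ref{eq:polynomial}) and then recognise the resulting expression as a bounded bilinear map. First I would write $\mathbf{h} = \text{circ}(\mathbf{c})\mathbf{x} = \sum_{k=0}^{n-1} c_k \mathbf{P}^k \mathbf{x}$, so that $\mathbf{h}$ is affine in $\mathbf{c}$ and affine in $\mathbf{x}$ separately, with $\partial \mathbf{h}/\partial c_k = \mathbf{P}^k\mathbf{x}$, a cyclic shift of the input. Setting $\mathbf{g} := \nabla_{\mathbf{h}} f \in \mathbb{R}^{n\times 1}$ and applying the chain rule coordinate by coordinate (valid since $\mathbf{h}$ is smooth in $\mathbf{c}$ and $f$ is differentiable) gives $\partial f/\partial c_k = \mathbf{g}^{\top}\mathbf{P}^{k}\mathbf{x}$, so the full parameter gradient is $\nabla_{\mathbf{c}} f = \big(\mathbf{g}^{\top}\mathbf{x},\, \mathbf{g}^{\top}\mathbf{P}\mathbf{x},\, \dots,\, \mathbf{g}^{\top}\mathbf{P}^{n-1}\mathbf{x}\big)^{\top} = \text{circ}(\mathbf{x})^{\top}\mathbf{g}$. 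This already displays $\nabla_{\mathbf{c}} f$ as the image of the pair $(\mathbf{g},\mathbf{x})$ under a map $B(\mathbf{g},\mathbf{x})$ that is linear in each argument, i.e.\ a (vector-valued) bilinear form.

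For the boundedness part I would use that each $\mathbf{P}^{k}$ is a permutation, hence orthogonal, so $\|\mathbf{P}^{k}\mathbf{x}\|_2 = \|\mathbf{x}\|_2$; Cauchy--Schwarz then gives $|\partial f/\partial c_k| = |\mathbf{g}^{\top}\mathbf{P}^{k}\mathbf{x}| \le \|\mathbf{g}\|_2\|\mathbf{x}\|_2$ for every $k$, and summing the $n$ squared coordinates yields $\|\nabla_{\mathbf{c}} f\|_2 \le \sqrt{n}\,\|\mathbf{g}\|_2\|\mathbf{x}\|_2$. (An alternative, sometimes sharper, route is to bound the operator norm $\|\text{circ}(\mathbf{x})^{\top}\|_2 = \|\text{FFT}(\mathbf{x})\|_{\infty} \le \|\mathbf{x}\|_1$.) Thus $B$ is a bounded bilinear form with norm at most $\sqrt{n}$, and --- applying the same computation block-wise through Eq.~(\ref{eq:blockcirc_matvec}), where each block contributes $\partial f/\partial \mathbf{c}_{i,j} = \text{circ}(\mathbf{x}_j)^{\top}\mathbf{g}_i$ --- the per-block constant becomes $\sqrt{p}$, which is exactly the size-dependent growth visible in Figure~\ref{fig:least_squre} and the reason a learning-rate correction is needed.

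I do not expect a genuine mathematical obstacle here; the work is mostly bookkeeping. The points to be careful about are: stating precisely which norms enter the definition of ``bounded bilinear form''; justifying the chain-rule step rigorously (immediate, since the composition is differentiable); and, for the block version, re-indexing $\mathbf{c}_{i,j}$, $\mathbf{x}_j$, $\mathbf{g}_i$ cleanly so the bound is stated with $p$ rather than $n$. I would therefore keep the proof short, leading with the identity $\nabla_{\mathbf{c}} f = \text{circ}(\mathbf{x})^{\top}\mathbf{g}$ and then the dimension-dependent constant.
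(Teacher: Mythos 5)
Your proposal is correct and follows essentially the same route as the paper: differentiate through the polynomial representation $\mathbf{h}=\sum_k c_k\mathbf{P}^k\mathbf{x}$ to get $\partial f/\partial c_k=\nabla f(\mathbf{h})^{\top}\mathbf{P}^k\mathbf{x}$, read this off as a bilinear map in $(\nabla f(\mathbf{h}),\mathbf{x})$, and bound it via Cauchy--Schwarz together with the norm-preservation of the permutation $\mathbf{P}^k$, which is exactly the paper's chain of inequalities. The extra material you add (the closed form $\nabla_{\mathbf{c}}f=\text{circ}(\mathbf{x})^{\top}\mathbf{g}$, the explicit $\sqrt{n}$ constant for the full gradient vector, and the block-wise remark) goes slightly beyond what the paper states but is consistent with it.
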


\begin{proof}
According to Eq. (\ref{eq:polynomial}), it can be noticed that the Jacobian matrix $\mathbf{J}$ for the linear mapping from $\mathbf{c}$ to $\mathbf{h}$ has the form $[\mathbf{I}, \mathbf{P}, \dots, \mathbf{P}^{n-1}]\mathbf{x}$. Next, we apply chain rule to calculate the first order  derivative as following:
\begin{align}
\begin{split}
\nabla f(\mathbf{c})
&=
\nabla f(\mathbf{h})^T \mathbf{J}
\\
&=
\nabla f(\mathbf{h})^T 
[\mathbf{I}, \mathbf{P}, \dots, \mathbf{P}^{n-1}]\mathbf{x},
\\
\nabla f(\mathbf{c}_i)
&= 
\nabla f(\mathbf{h})^T 
\mathbf{P}^i
\mathbf{x}
\\
&\leq 
||\nabla f(\mathbf{h})||\times||\mathbf{P}^{i}\mathbf{x}||
\\
&\leq 
||\nabla f(\mathbf{h})||\times ||\mathbf{x}||
,
\end{split}
\end{align}
where $||\cdot||$ is a norm function.
It can be seen that $\nabla f(\mathbf{c}_i)$ is a bounded bilinear mapping from $\nabla f(\mathbf{h})\times  \mathbf{x}$ to $\mathbf{c}_i$. 
\end{proof}

\begin{proposition}
Given a general matrix $\mathbf{A}\in\mathbb{R}^{n\times n}$, 
let $\mathbf{g}=\mathbf{A}\mathbf{x}$.
For the same differentiable function $f$,
$\min \{\nabla f(\mathbf{c})\} \geq n \times \min\{\nabla f(\mathbf{A})\}$.
\end{proposition}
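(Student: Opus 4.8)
The plan is to read the statement as comparing the gradient of $f$ with respect to the $n$ free parameters $\mathbf{c}$ of the circulant layer against the gradient with respect to the $n^2$ free entries of a dense matrix $\mathbf{A}$ that implements the \emph{same} linear map, i.e.\ $\mathbf{A}=\text{circ}(\mathbf{c})$ so that $\mathbf{g}=\mathbf{A}\mathbf{x}=\mathbf{h}$ and both settings feed $f$ the same input; here $\min\{\cdot\}$ denotes the minimum over the entries of the respective gradient object (vector for $\mathbf{c}$, matrix for $\mathbf{A}$). Under this reading the inequality should follow from one structural fact: each circulant-parameter derivative is exactly a sum of $n$ of the dense-entry derivatives, hence it cannot be smaller than $n$ times the smallest of them.

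First I would compute the dense gradient: since $g_i=\sum_{j}A_{ij}x_j$, the chain rule gives $\partial f/\partial A_{ij}=\nabla f(\mathbf{g})_i\,x_j$, i.e.\ $\nabla f(\mathbf{A})=\nabla f(\mathbf{g})\,\mathbf{x}^{T}$. Next I would compute the circulant gradient using the polynomial form of Eq.~(\ref{eq:polynomial}) together with the Jacobian already derived in Proposition~1, which gives $\partial f/\partial c_k=\nabla f(\mathbf{h})^{T}\mathbf{P}^{k}\mathbf{x}=\sum_{i=0}^{n-1}\nabla f(\mathbf{h})_i\,x_{(i-k)\bmod n}$. The key step is the index bookkeeping coming from Eq.~(\ref{eq:circ}): the parameter $c_k$ occupies exactly one position in each row $i$ of $\text{circ}(\mathbf{c})$, namely column $(i-k)\bmod n$, since $(\text{circ}(\mathbf{c}))_{i,(i-k)\bmod n}=c_k$. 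Therefore, with $\mathbf{A}=\text{circ}(\mathbf{c})$, each summand $\nabla f(\mathbf{h})_i\,x_{(i-k)\bmod n}$ is precisely $\partial f/\partial A_{i,(i-k)\bmod n}$, so $\partial f/\partial c_k=\sum_{i=0}^{n-1}\partial f/\partial A_{i,(i-k)\bmod n}$ is a sum of $n$ distinct dense-entry derivatives, one per row.

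With this identity established, the conclusion is immediate: let $m=\min\{\nabla f(\mathbf{A})\}$; every summand above is at least $m$, so $\partial f/\partial c_k\ge nm$ for each $k$, and taking the minimum over $k$ yields $\min\{\nabla f(\mathbf{c})\}\ge nm=n\times\min\{\nabla f(\mathbf{A})\}$. I would also remark that the mirror-image argument gives $\max\{\nabla f(\mathbf{c})\}\le n\times\max\{\nabla f(\mathbf{A})\}$, and that, together with the boundedness from Proposition~1, this is exactly what pins the circulant gradient (and, block by block, the block-circulant gradient) to scale linearly in the block size. The main obstacle I anticipate is not the inequality itself --- "a sum of $n$ numbers each $\ge m$ is $\ge nm$" is trivial --- but making the statement's implicit conventions precise: that $\mathbf{A}$ is taken to be the same matrix $\text{circ}(\mathbf{c})$ so the common factor $\nabla f(\mathbf{h})=\nabla f(\mathbf{g})$ cancels out of the comparison, that $\min$ ranges over gradient entries, and that the modular indexing is handled carefully so the $n$ entries hit by $c_k$ are genuinely distinct; that last point is the only place a careless slip could enter.
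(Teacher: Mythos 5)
Your proposal is correct and follows essentially the same route as the paper: compute $\nabla f(\mathbf{A})=\nabla f(\mathbf{h})\mathbf{x}^{T}$, write $\partial f/\partial c_k=\nabla f(\mathbf{h})^{T}\mathbf{P}^{k}\mathbf{x}$ as a sum of $n$ terms of the form $\nabla f(\mathbf{h}_j)\mathbf{x}_{j-k}$, and bound the sum below by $n$ times the smallest entry of $\nabla f(\mathbf{A})$. If anything, you are more explicit than the paper in identifying each summand with a distinct dense-entry derivative $\partial f/\partial A_{i,(i-k)\bmod n}$ and in spelling out the interpretive conventions the statement leaves implicit.
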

\begin{proof}
Note that the derivative of $\mathbf{g}$ and $\mathbf{h}$ are independent of $\mathbf{A}$ and $\mathbf{c}$, thereby $\nabla f(\mathbf{g})=\nabla f(\mathbf{h})$. 
Then, we can compute the derivative of $\mathbf{A}$ by:
\begin{align}
\begin{split}
\nabla f(\mathbf{A})
&=
\nabla f(\mathbf{h}) \mathbf{x}^T
,
\\
\nabla f(\mathbf{A}_{j,i})
&=
\nabla f(\mathbf{h}_j)
\mathbf{x}_i
.
\end{split}
\end{align}
In the case of circulant vector based weight parameters, we have the result:
\begin{align}
\begin{split}
\nabla f(\mathbf{c}_i)
&= 
\nabla f(\mathbf{h})^T 
\mathbf{P}^i
\mathbf{x}
=
\sum_{j=0}^{n-1}
\nabla f(\mathbf{h}_j)
\mathbf{x}_{j-i}
,
\end{split}
\end{align}
where for $j-i < 0$, it refers to the index of $j-i+n$. 
Therefore, it can be seen that:
\begin{align}
\begin{split}
&\min \{\nabla f(\mathbf{c})\}
=
\min_i \{\nabla f(\mathbf{c}_i)\}
\\
&= 
\min_i \{\sum_{j=0}^{n-1} \nabla f(\mathbf{h}_j)
\mathbf{x}_{j-i}\}
\\
&\geq
\min_i \{n\times \{ \min_j \nabla f(\mathbf{h}_j)\mathbf{x}_{j-i}\}\}
\\
&\geq 
n\times \min_i \{ \min_j \{ \nabla f(\mathbf{h}_j)\mathbf{x}_{j-i}\}\}
\\
&\geq 
n\times \min_i \{ \min_j \{ \min \nabla f(\mathbf{A})\}\}
\\
&\geq 
n\times \min \{\nabla f(\mathbf{A})\}
.
\end{split}
\end{align}
\end{proof}

\begin{proposition}
\label{prop:blockcirculant}
Given a block circulant matrix $\mathbf{B}$, for any submatrix $\mathbf{B}_{i,j}$, the corresponding vector $\mathbf{c}_{i,j}$ satisfies $\min \{\nabla f(\mathbf{c}_{i,j})\} \geq p\times \min \{ \nabla f(\mathbf{A}) \}$.
\end{proposition}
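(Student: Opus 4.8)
The plan is to run the argument of Proposition~2 verbatim, but localized to a single $p\times p$ block. First I would fix the block indices $(i,j)$ and observe from Eq.~(\ref{eq:blockcirc_matvec}) that, among all output subvectors $\mathbf{h}_0,\dots,\mathbf{h}_{q-1}$, only $\mathbf{h}_i$ depends on $\mathbf{c}_{i,j}$, and it does so solely through the summand $\text{circ}(\mathbf{c}_{i,j})\mathbf{x}_j$; the other summands $\text{circ}(\mathbf{c}_{i,j'})\mathbf{x}_{j'}$ with $j'\neq j$ are constant in $\mathbf{c}_{i,j}$. Hence the Jacobian of the map $\mathbf{c}_{i,j}\mapsto\mathbf{h}_i$ is exactly the one from Proposition~1, but at dimension $p$: it is $[\mathbf{I},\mathbf{P},\dots,\mathbf{P}^{p-1}]\mathbf{x}_j$ with $\mathbf{P}\in\mathbb{R}^{p\times p}$ the cyclic permutation matrix of Eq.~(\ref{eq:polynomial}).

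Next I would apply the chain rule to get $\nabla f(\mathbf{c}_{i,j})=\nabla f(\mathbf{h}_i)^T[\mathbf{I},\mathbf{P},\dots,\mathbf{P}^{p-1}]\mathbf{x}_j$ and read off its $k$-th coordinate as the cyclic correlation $\nabla f(\mathbf{c}_{i,j})_k=\sum_{l=0}^{p-1}\nabla f(\mathbf{h}_i)_l\,(\mathbf{x}_j)_{l-k}$, with indices taken modulo $p$, exactly as in the proof of Proposition~2. The key identification is that each product $\nabla f(\mathbf{h}_i)_l\,(\mathbf{x}_j)_m$ is an entry of the dense-matrix gradient $\nabla f(\mathbf{A})=\nabla f(\mathbf{h})\mathbf{x}^T$: since $\nabla f(\mathbf{h})=\nabla f(\mathbf{g})$ is independent of how the weight is parametrized, the product equals $\nabla f(\mathbf{A}_{a,b})$ for the global indices $a=ip+l$ and $b=jp+m$. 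Consequently every summand satisfies $\nabla f(\mathbf{h}_i)_l\,(\mathbf{x}_j)_m\geq\min\{\nabla f(\mathbf{A})\}$.

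Finally, since each coordinate $\nabla f(\mathbf{c}_{i,j})_k$ is a sum of exactly $p$ such terms, $\nabla f(\mathbf{c}_{i,j})_k\geq p\times\min\{\nabla f(\mathbf{A})\}$, and this lower bound is preserved under $\min_k$, which yields $\min\{\nabla f(\mathbf{c}_{i,j})\}\geq p\times\min\{\nabla f(\mathbf{A})\}$. I expect the only delicate part to be the bookkeeping: checking that the local Jacobian is genuinely the length-$p$ circulant Jacobian (so the term count is $p$, not $n$) and that the cross-block summands truly drop out of the derivative with respect to $\mathbf{c}_{i,j}$. The inequality chain itself mirrors Proposition~2 and needs no sign hypotheses, since $\sum_{l=0}^{p-1} t_l\geq p\min_l t_l$ holds regardless of the signs of the $t_l$.
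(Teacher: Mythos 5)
Your proof is correct and follows essentially the same route as the paper's: compute the local Jacobian $[\mathbf{I},\mathbf{Q},\dots,\mathbf{Q}^{p-1}]\mathbf{x}_j$ of the map $\mathbf{c}_{i,j}\mapsto\mathbf{h}_i$ and then rerun the Proposition~2 argument at dimension $p$. You merely make explicit the bookkeeping the paper leaves implicit (that each product $\nabla f(\mathbf{h}_i)_l(\mathbf{x}_j)_m$ is an entry of the full dense gradient $\nabla f(\mathbf{h})\mathbf{x}^T$, so the sum of $p$ such terms is bounded below by $p\times\min\{\nabla f(\mathbf{A})\}$), which is a faithful elaboration rather than a different argument.
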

\begin{proof}
According to Eq. (\ref{eq:blockcirc_matvec}), we can calculate derivative:
\begin{equation}
\nabla f(\mathbf{c}_{i,j})
=
\nabla f(\mathbf{h}_i)^T
[\mathbf{I}, \mathbf{Q}, \dots, \mathbf{Q}^{p-1}]\mathbf{x}_j,
\end{equation}
where $\mathbf{Q}$ is in the similar form as $\mathbf{P}$ but in shape $p\times p$. 
Thus, for each circulant submatrix, it follows from the aforementioned result of circulant matrix derivative:
\begin{equation}
    \min \{\nabla f(\mathbf{c}_{i,j})\} \geq p\times \min \{ \nabla f(\mathbf{A}) \} .
\end{equation}
\end{proof}

\begin{corollary}
Compared with gradients of dense matrix based linear layer, circulant matrix based linear layer gradient explodes by $n$, and block circulant matrix based linear layer explodes by $p$.
\end{corollary}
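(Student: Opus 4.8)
The plan is to obtain the Corollary as an immediate consequence of Propositions~2 and~3, and then to spend the remaining effort making the word ``explodes'' precise. First I would restate the two building blocks: Proposition~2 gives $\min\{\nabla f(\mathbf{c})\}\ge n\times\min\{\nabla f(\mathbf{A})\}$ for a single circulant layer of size $n$, and Proposition~3 gives $\min\{\nabla f(\mathbf{c}_{i,j})\}\ge p\times\min\{\nabla f(\mathbf{A})\}$ for each $p\times p$ circulant block of a block circulant layer. Setting $p=n$ (a single block) recovers the circulant case from the block circulant one, so the two statements are mutually consistent and the Corollary is just their joint reading, with the dense layer serving as the common baseline $\mathbf{A}$ through the identity $\nabla f(\mathbf{g})=\nabla f(\mathbf{h})$ already used in those proofs.

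Second, to justify the language ``gradient explodes by $n$ (resp. $p$)'' rather than merely ``is bounded below by $n$ (resp. $p$) times'', I would complement the lower bound with a matching upper estimate. Unrolling the chain rule exactly as in the proof of Proposition~2, each coordinate of $\nabla f(\mathbf{c})$ is a sum $\sum_{j=0}^{n-1}\nabla f(\mathbf{h}_j)\,\mathbf{x}_{j-i}$ of $n$ terms, each of the same form as a dense-layer gradient entry $\nabla f(\mathbf{A}_{j,i})=\nabla f(\mathbf{h}_j)\mathbf{x}_i$; by the triangle inequality $|\nabla f(\mathbf{c}_i)|\le n\max_{j,i}|\nabla f(\mathbf{A}_{j,i})|$. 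Combined with the lower bound, this pins the scaling of the circulant gradient at the factor $n$, and, restricting the same argument to a single block, at the factor $p$ for the block circulant case. This is the step I expect to be the real content, since the one-sided $\min$ bounds of Propositions~2--3, read literally over possibly negative entries, could be made vacuous by cancellation; the honest statement is the order-of-magnitude one, e.g.\ $||\nabla f(\mathbf{c})||_\infty = \Theta(n)\,||\nabla f(\mathbf{A})||_\infty$ under the mild assumption that the summands do not systematically cancel.

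Finally I would assemble the two cases: for the circulant adapter the per-entry gradient is $\Theta(n)$ times that of an equivalent dense layer, and for the block circulant adapter with block size $p$ the blockwise argument gives $\Theta(p)$, which is precisely the Corollary. The main obstacle is not any new computation but this formalization of ``explodes'': I would either (i) present it strictly as the worst-case lower bounds of Propositions~2--3, stating the Corollary as a one-line deductive corollary with no further work, or (ii) upgrade it to the two-sided $\Theta$ statement at the cost of a no-cancellation hypothesis. I would lean on (i) for a clean corollary and defer the sharper (ii) to the empirical check in Figure~\ref{fig:least_squre}, where the measured gradient mean is observed to scale linearly with $p$.
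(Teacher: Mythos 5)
Your proposal is correct and takes essentially the same route as the paper: the paper supplies no proof at all for this corollary, treating it exactly as your option (i) -- an immediate joint reading of Propositions~2 and~3 with the dense layer as common baseline, deferring the sharper linear-in-$p$ scaling to the empirical evidence in the simulation figure. Your option (ii), with the matching triangle-inequality upper bound and the caveat that the one-sided $\min$ bounds can be weakened by cancellation among signed terms, goes beyond what the paper does and is a legitimate strengthening, but it is not needed to recover the corollary as stated.
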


\subsection{Single Layer Neural Network Simulation}
Besides theoretical proof, we empirically simulate the gradient exploding phenomenon of circulant structure by training on a single linear layer neural network. 
Given a random matrix $\mathbf{W}\in\mathbb{R}^{n\times n}$, let $\mathbf{y}=\mathbf{W}\mathbf{x}+\mathbf{\epsilon}$ with $\mathbf{\epsilon}\sim \mathcal{N}(\mathbf{0}, \mathbf{I})$. 
The training data $(\mathbf{x},\mathbf{y})$ can be randomly generated on-the-fly. 
For dense matrix based linear layer, we train $\hat{\mathbf{W}}$ with prediction $\hat{\mathbf{y}}=\hat{\mathbf{W}}\mathbf{x}$ by minimizing the mean square error (MSE) between $\mathbf{y}$ and $\hat{\mathbf{y}}$:
\begin{equation}
    \text{MSE}(\hat{\mathbf{y}}, \mathbf{y}) = || \hat{\mathbf{y}} - \mathbf{y} ||^2.
\end{equation}
For block circulant matrix based linear layer, we train $\hat{\mathbf{B}}$ according to Eq. (\ref{eq:blockcirc_matvec}), where there are $q\times q$ submatrices that are circulant matrices.

In practice, we set batch size 32, training iteration 10000 and learning rate 0.1 for Adadelta optimizer \cite{zeiler2012adadelta}. 
For both dense matrix and block circulant matrix based linear layer, MSE and gradient mean values are reported and compared in Fig. \ref{fig:least_squre}. 
It can be seen that both dense matrix and block circulant matrix based linear layers converge.
But block circulant matrix based linear layer training ends with smaller and better MSE. 
Such result may be caused by the circulant structure that may work as regularization. 

As shown in Fig. \ref{fig:least_squre}, the gradient curve of block circulant matrix starts with large gradient values and decreases along with the increase of training iterations.
This is caused by the convergence of learning block circulant matrix.  
It can also be noticed that gradients of different block circulant matrices are linearly proportional to block size settings. 
The larger block size results in larger gradient value, and corresponding gradient is around $p$ times larger than dense matrix. 
When $p=1024$, the block circulant matrix results in a single block, and it becomes a circulant matrix that has the largest gradient value. 

\subsection{Learning Heuristic}
We propose using block circulant matrix based linear layer as adapter for fine-tuning large language models. 
More specifically, for a large weight matrix $\mathbf{W}$, the adapter learns the weight change matrix $\Delta \mathbf{W}$ by letting $\mathbf{B}=\Delta \mathbf{W}$. 
Given input $\mathbf{x}$, the forward process now becomes:
\begin{equation}
    \mathbf{h} + \Delta\mathbf{h} 
    = 
    \mathbf{W}\mathbf{x} + \Delta\mathbf{W}\mathbf{x} 
    = 
    \mathbf{W}\mathbf{x}  + \mathbf{B}\mathbf{x}
    .
\end{equation}
In this way, the storage complexity is linear, as block circulant matrix can be constructed from corresponding vectors. 
The computation complexity is loglinear, since block circulant matrix vector multipliation can be executed using FFT operations which has loglinear computation complexity. 
Fig. \ref{fig:block_circ_forward} visualizes the forward process. 
It can be seen that all computations are performed on vectors, thereby effectively reducing the training complexity especially compared with full fine-tuing on $\Delta\mathbf{W}$.

Besides, according to our theoretical proof and empircal observation, the gradient explosion risk of block circulant matrix can cause model to diverge occasionally. 
To achieve a stable training process, we propose to factor down the learning rate $\alpha$ by block size $p$:
\begin{equation}
\label{eq:lr}
    \alpha \gets \alpha / p. 
\end{equation}
This heuristic solution aims at updating weight parameters of each $\mathbf{c}_{i,j}$ inside $\mathbf{B}$ with gradient values as large as in general dense matrix. 
In this way, at each training step, the updated block circulant matrix does not change significantly, thereby resulting in less output change for next iteration. 
It turns out that this solution can effectively ensure a successful training of the block circulant adapter.

\begin{table*}[!tb]
\centering
\setlength{\tabcolsep}{8.5pt}
\resizebox{0.75\textwidth}{!}{
\begin{tabular}{c l|c c c c c c c}
\hline
\multicolumn{2}{c|}{Method} &  SST-2 & MRPC & CoLA & QNLI & RTE & STS-B & Avg. \\
\hline
\multirow{6}{*}{\rotatebox{90}{BASE}} & FF & $94.8$ & $90.2$ & $63.6$ & $92.8$ & $78.7$ & $91.2$ & 85.2 \\
~ & LoRA  & $95.1_{\pm0.2}$ & $89.7_{\pm0.7}$ & $63.4_{\pm1.2}$ & $93.3_{\pm0.3}$ & $78.4_{\pm0.8}$ & $91.5_{\pm0.2}$ & 85.2 \\
~ & VeRA &$ 94.6_{\pm0.1} $ &$ 89.5_{\pm0.5}$ &$ 65.8_{\pm0.8}$ &$ 91.8_{\pm0.2}$ &$ 78.7_{\pm0.7}$ &$ 90.7_{\pm0.2}$ & 85.2 \\
~ & FourierFT &$ 94.2_{\pm0.3} $ &$ 90.0_{\pm0.8}$ &$ 63.8_{\pm1.6}$ &$ 92.2_{\pm0.1}$ &$ 79.1_{\pm0.5}$ &$ 90.8_{\pm0.2}$ & 85.0 \\                
~ & $\text{Ours}_{p=256}$  &$94.7_{\pm0.5} $ &$ 89.5_{\pm0.7}$ &$ 62.1_{\pm0.6}$ &$ 91.8_{\pm0.2}$ &$ 80.1_{\pm1.4}$ &$ 90.8_{\pm0.1}$ & 84.8\\
~ & $\text{Ours}_{p=768}$  &$93.8_{\pm0.1} $ &$ 89.0_{\pm0.6}$ &$ 64.4_{\pm1.6}$ &$ 92.0_{\pm0.2}$ &$ 79.8_{\pm0.9}$ &$ 90.3_{\pm0.3}$ & 84.9 \\
\hline
\multirow{6}{*}{\rotatebox{90}{LARGE}} & FF & $96.4$ & $90.9$ & $68.0$ & $94.7$ & $86.6$ & $92.4$ & 88.2 \\
~ & LoRA & $96.2_{\pm0.5}$ & $90.2_{\pm1.0}$ & $68.2_{\pm1.9}$ & $94.8_{\pm0.3}$ & $85.2_{\pm1.1}$ & $92.3_{\pm0.5}$ & 87.8 \\
~ & VeRA & $96.1_{\pm0.1}$ & $90.9_{\pm0.7}$ & $68.0_{\pm0.8}$ & $94.4_{\pm0.2}$ & $85.9_{\pm0.7}$ & $91.7_{\pm0.8}$ & 87.8 \\
~ & FourierFT & $96.0_{\pm0.2}$ & $90.9_{\pm0.3}$ & $67.1_{\pm1.4}$ & $94.4_{\pm0.4}$ & $87.4_{\pm1.6}$ & $91.9_{\pm0.4}$ & 88.0 \\
~ & $\text{Ours}_{p=512}$ &  $96.1_{\pm0.3}$ & $90.7_{\pm1.0}$ & $67.8_{\pm1.0}$ & $94.1_{\pm0.1}$ & $88.1_{\pm1.2}$ & $91.7_{\pm0.3}$ & 88.1\\
~ & $\text{Ours}_{p=1024}$ &  $95.8_{\pm0.2}$ & $89.7_{\pm0.7}$ & $66.7_{\pm0.8}$ & $94.2_{\pm0.1}$ & $85.6_{\pm1.2}$ & $91.4_{\pm0.4}$ & 87.2 \\
\hline
\end{tabular}
}
\caption{Fine-tuning RoBERTa base and large model on GLUE. Larger metric value means better model performance.}
\label{tbl:rob_glue}
\end{table*}

\begin{figure}[!t]
\centering
\subfloat[\scriptsize Partition size $p=1024$]{
\includegraphics[width=0.44\textwidth]{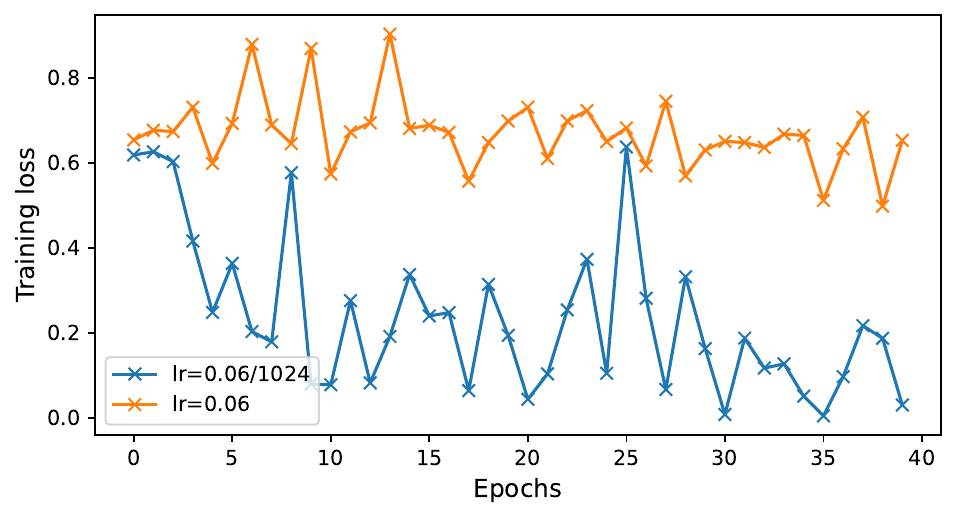}
}
\newline
\subfloat[\scriptsize Partition size $p=512$]{ 
\includegraphics[width=0.44\textwidth]{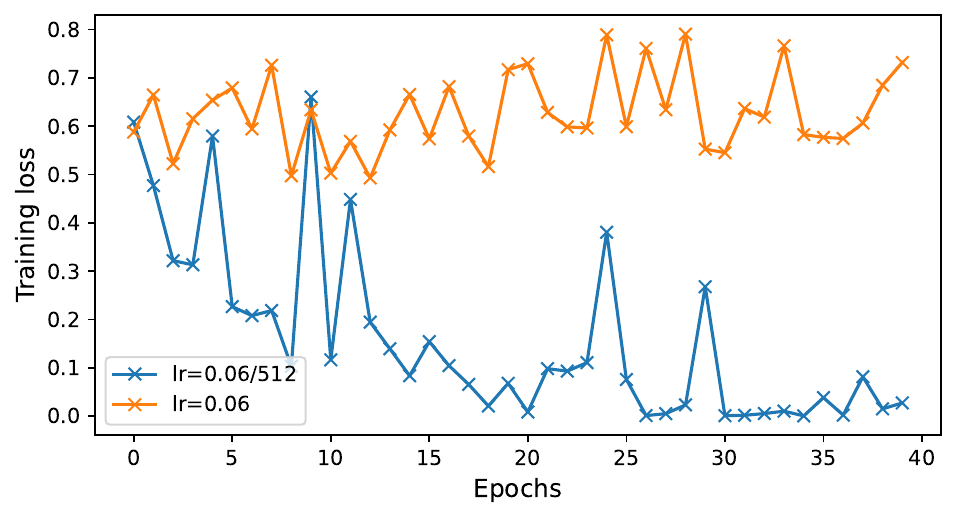}
}
\caption{
Training loss curve of the RoBERTa-large model on the MRPC dataset with our block circulant adapter. 
The learning rate 0.06 is the setting for training the FourierFT adapter that is also a Fourier domain based method like ours.
It can be seen that without applying our heuristic (Eq.\ref{eq:lr}), the model training diverges. However, after using the heuristic, the model successfully converges. }
\label{fig:convergence}
\end{figure}

\section{Experiments}
\label{sec:experiments}
The proposed block circulant adapter is evaluated by fine-tuning large language models on natural language processing tasks. 
To verify the effectiveness of our method, we analyze the downstream task performance, fine-tuning complexity and model convergence.  
The complexity analysis presents storage and computation complexity of the experimental model. 
The convergence analysis confirms the effectiveness of our proposed solution as expressed in Eq. (\ref{eq:lr}).

\textbf{Models and Datasets}. 
We fine-tune both the RoBERTa model \cite{liu2019roberta} and the LLaMA2-7B model \cite{llama}, which are the most frequently selected models for fine-tuning adapters. RoBERTa model has two different versions: RoBERTa-base with hidden dimension $n=768$ and RoBERTa-large with $n=1024$.

For RoBERTa models, we evaluate on the GLUE benchmark dataset, a standard multi-task dataset proposed by \cite{Wang2018GLUEAM} for natural language understanding.
Following \cite{DBLP:conf/icml/GaoWCLWC024}, we run experiments on Corpus of Linguistic Acceptability (CoLA) by \cite{CoLA}, Stanford Sentiment Treebank (SST-2) by \cite{SST-2}, Microsoft Research Paraphrase Corpus (MRPC) by \cite{MRPC}, Semantic Textual Similarity Benchmark (STS-B) by \cite{STS-B}, Question Natural Language Inference  (QNLI) by \cite{QNLI}, and Recognizing Textual Entailment (RTE) by \cite{RTE}. 

For the LLaMA2-7B model, we train on a cleaned version of Alpaca  \cite{alpaca} and evaluate on MT-Bench \cite{mtbench}, which contains 51K instruction-response pairs for instruction tuning. The cleaned version addresses issues like hallucinations, merged instructions, and empty outputs. We also evaluate on the GSM8K dataset \cite{cobbe2021training}, a high-quality dataset with 8.5K grad school math word problems.

\textbf{Baselines}.
We compare our proposed adapter with different adapters. The classical full fine-tuning (FF) adapter updates all parameters within the model.
Low rank adaption (LoRA) by \cite{LoRA} is a well-known fine-tuning method for large language models. 
It applies low rank factorization to weight change matrix and has been successfully used in various applications. 
Vector-based random matrix adaptation (VeRA) by \cite{vera} uses a pair of low-rank matrices shared among all layers and learns a small scaling vector. LaMDA++ by \cite{lamda} freezes the first projection matrix (PMA) in the adaptation path and the second projection matrix (PMB) in the early fine-tuning stage, while introducing a low-dimensional trainable square matrix. 
FourierFT \cite{DBLP:conf/icml/GaoWCLWC024} is the state-of-the-art fine-tuning method that trains with parameters in fourier domain.
Similarly, our method can also be seen as a fourier domain based method since FFT in Eq. (\ref{eq:circ_matvec}) directly transforms parameters to fourier domain. 
However, FourierFT uses 2D FFT computation rather than 1D FFT as in our method.

\textbf{Implementations}.
Our block circulant adapter is implemented using the PyTorch framework \cite{paszke2019pytorch}.
The partition size $p$ is set as large as possible to achieve the smallest storage and computation cost. 
Thus, $p$ is set as 768 and 256 for RoBERTa-base model, 1024 and 512 for RoBERTa-large model, 1024, 512, 256 and 128 for LLaMA2-7b model. 
Following \cite{DBLP:conf/icml/GaoWCLWC024}, we apply block circulant fine-tuning on query and value weight matrices inside the attention layer of two RoBERTa models and the LLaMA2-7B model fine-tuned on the alpaca dataset.
Following \cite{lamda}, we fine-tune on the MHSA and FFN layers of LLaMA2-7B model on the GSM8K dataset.
The classification head is fully fine-tuned.

\textbf{Metrics}.
Note that different datasets have different performance metrics. 
Matthew correlation coefficient (MCC) is report for CoLA, Pearson correlation coefficient (PCC) is reported for STS-B, and accuracy (Acc.) is reported for all remaining tasks. 
We evaluate the fine-tuned model on the Alpaca dataset using MT-Bench \cite{mtbench}, with GPT-4 \cite{gpt4} subsequently assigning scores to the model's responses for 80 multi-turn questions on a scale of 10. 
Following \cite{DBLP:conf/icml/GaoWCLWC024}, we perform 5 runs on each dataset with different random seeds and report the median metric value with standard deviation. 
Overall, larger task metric value means better model performance. 
For complexity analysis of all methods, we compute related number of trainable parameters and floating point operations (FLOPs). 
Smaller complexity value means better model performance. 
Letters K, M, G in experimental results indicate data volume units, and they stand for kilo, mega and giga, respectively.

\begin{figure}[!t]
\centering
\subfloat[ Adapter complexity on RoBERTa-base model.]{ 
\includegraphics[width=0.47\textwidth]{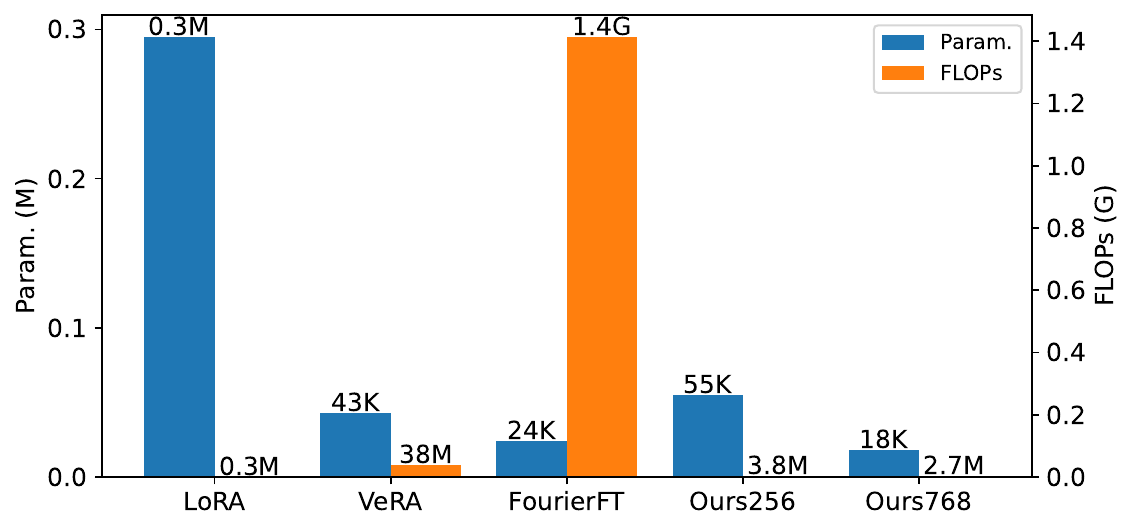}
}
\newline
\subfloat[ Adapter complexity on RoBERTa-large model.]{
\includegraphics[width=0.47\textwidth]{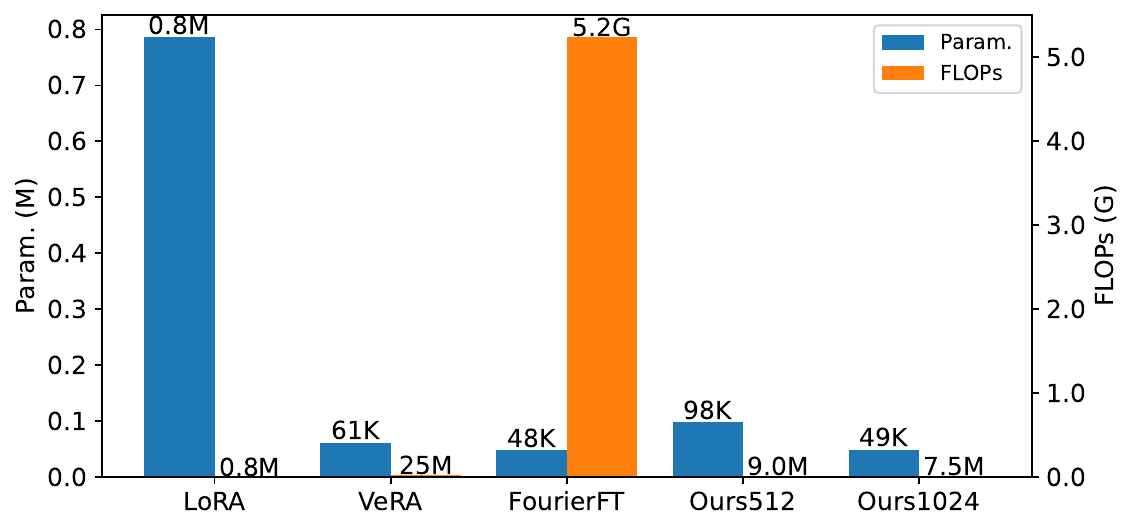}
}
\caption{
Complexity comparison of different adapters.
The larger partition size $p$ results in smaller storage and computation complexity of our block circulant adaper. 
FF method is not presented due to its high cost in both parameters and FLOPs. 
Our proposed block circulant adapters can balance between parameter amount and FLOPs. 
}
\label{fig:param_flops}
\end{figure}

\subsection{Convergence Analysis}
Given that our weight parameters are injective to its frequency domain representation, both FourierFT and our approach can be seen as Fourier domain based fine-tuning methods.
We adopt the same learning rate setting for fine-tuning on GLUE dataset.
Divergence of fine-tuning block circulant adapter on GLUE dataset can be observed on some tasks  without applying our proposed heuristic Eq. (\ref{eq:lr}). 
Such phenomenon can be caused by a bad initialization due to poorly chosen random seeds or the large gradients of block circulant matrix as proved in this paper.  
Bad initialization is an universally applicable hypothesis, which is also orthogonal to our proposal. 
However, according to our theoretical proof and empirical observation, we argue that our proposed heuristic can effectively ensure a stable training process. 

Fig. \ref{fig:convergence} shows an example of training loss curve comparison between with and without applying our proposed heuristic. 
It can be seen that when $p=1024$ and $p=512$, the training loss curve does not converge with learning rate 0.06 which is used for training FourierFT adapter. 
After adopting the proposed heuristic, it converges successfully.

\subsection{Fine-tuning Performance}
Table \ref{tbl:rob_glue} presents fine-tuned RoBERTa base and large models performance on GLUE dataset. 
In terms of RoBERTa base model, while $p=256$ results in more training parameters than $p=768$, our method with $p=256$ is slightly worse than our method with $p=768$ on average. 
This is mainly caused by the result on CoLA datasets, where over-parameterization seems not improving model performance. 
For rest datasets, $p=256$ achieves close or better performance than $p=768$.
For RoBERTa large model, our method with partition size $p=512$ achieves the best average performance. 
From $p=1024$ to $p=512$, our model performance improves significantly on average.
Note that partition size $p=1024$ is the maximum configurable partition size, since $n=1024$ for RoBERTa-large model. 
This shows even when $p=1024$ our method results in the smallest block circulant adapter, it still achieves competitive performance compared with other approaches. 

Table \ref{tbl:llama_mt_gsm8k} shows the performance of the fine-tuned LLaMA2-7b model on the Alpaca and GSM8K datasets. 
It can be noticed that increasing partition size results in decreasing parameter amount and FLOPs.
However, the task score or accuracy does not always decrease with larger partition size. 
This may be caused by the over-parameterization of the large LLaMA model or  the strong structure of the proposed method that may serve as a regularization. 
Compared with other adapters, our method can balance between parameters amount and FLOPs. 

\subsection{Complexity Analysis}
Our block circulant adapter has linear storage complexity because of the circulant structure. 
More specifically, the storage complexity is $O(n^2/p)$. 
Thus, larger partition size $p$ results in smaller number of parameters for fine-tuning. 
Our computation complexity is loglinear with the FFT operations for fast matrix vector multiplication, i.e., $O(\frac{n^2}{p}\log p)$. 
When $p$ is close to $n$, then the computation complexity becomes close to $O(n\log n)$.
In particular, when $p=n$, it is a single circulant matrix with exactly $O(n\log n)$ computation complexity. 

\begin{table}[tb]
\centering
\setlength{\tabcolsep}{3pt} 
\resizebox{0.46\textwidth}{!}{
\begin{tabular}{c | c c c|c c c}
\hline
\multirow{2}{*}{Method} & \multicolumn{3}{c|}{MT-Bench}  & \multicolumn{3}{c}{GSM8K} \\
~ & FLOPs & Param. & Score &  FLOPs & Param. & Acc.\\
\hline
LoRA & $0.03G$ & $33.55M$ & $5.20$ & $0.01G$ & $28.05M$ & $36.9$ \\
VeRA & $2.29G$ & $1.65M$ & $5.08$ & - & - & - \\
FourierFT & $133.14G$ & $0.06M$ & $5.18$ & - & - & -\\
LaMDA & - & - & - & $0.06G$ & $4.37M$ & $37.9$ \\
LaMDA++ & - & - & - & - & $5.12M$ & $38.2$ \\                 
$\text{Ours}_{p=128}$ & $0.32G$ & $8.39M$ & $5.38$  & $1.33G$ & $35.13M$ & $38.6$ \\
$\text{Ours}_{p=256}$  & $0.19G$ & $4.19M$ & $5.38$ & $0.79G$ & $17.56M$ & $39.7$ \\
$\text{Ours}_{p=512}$  & $0.12G$ & $2.10M$ & $5.26$ & $0.48G$ & $8.78M$ & $38.6$ \\
$\text{Ours}_{p=1024}$ & $0.08G$ & $1.05M$ & $5.38$ & $0.31G$ & $4.39M$ & $37.8$ \\
\hline
\end{tabular}
}
\caption{Fine-tuning performance of LLaMA2-7B model. Larger score or accuracy value indicates better model performance. Missing values ``-" means not available. For example, LaMDA++ does not specify rank for each module, thereby with unknown FLOPs.
}
\label{tbl:llama_mt_gsm8k}
\end{table}

Figure \ref{fig:param_flops} visualizes the complexity comparison between different adapters. 
LoRA has large storage complexity but small computation complexity due to its low rank structure design.
The storage complexity of VeRA is $O(n+r)$, and the computational complexity  is $O(nr)$. However, in practice VeRA method can take large $r$ to achieve good performance.
FourierFT has small storage complexity but large computation complexity because of its sparse parameter structure in Fourier domain and heavy 2D FFT operations. 
It can be seen that across all models, LoRA has the largest parameter amount, and FourierFT has the largest FLOPs.  
The complexity of the different models listed in Table \ref{tbl:llama_mt_gsm8k} verifies the analysis.

For our method, we choose $p$ as large as possible for achieving small storage and computation complexity. 

Our proposed method can result in a good balance between storage and computation complexity. 
More specifically, compared with FourierFT, ours FLOPs is $32\times$ smaller on RoBERTa-large and   RoBERTa-base, while maintaining close or smaller amount of parameters. 
When compared with LoRA, ours training parameter amount is $16\times$ smaller on RoBERTa-large and RoBERTa-base, but our FLOPs is $5\times$ more than LoRA. 
On RoBERTA-base, our method is much smaller than VeRA in terms of both parameter amount and FLOPs. 
For LLaMA2-7B model, ours also achieves significantly smaller FLOPs than FourierFT and smaller parameter amount than LoRA.
VeRA takes slightly more parameters than ours and costing much higher FLOPs. 
On GSM8K dataset, our method can achieve the smallest number of parameters while maintaining similar accuracy. 

\subsection{Overall}
The proposed block circulant adapter achieves similar or better downstream task performance when compared with other adapters. 
In terms of number of parameters, our method requires as small as the state-of-the-art FourierFT adapter, which is significantly smaller than classical LoRA adapter. 
Regarding FLOPs amount, ours is as small as LoRA, which is much more smaller than FourierFT and VeRA. 
In general, our method can balance between parameter amount and FLOPs while keeping similar or better performance.

\section{Conclusion}
In this paper, we present a noval adapter design based on block circulant matrix and study the potential divergence risk via theoretical proof and empirical experiments.
To achieve a stable training process, we propose a heuristic solution that can result in successful convergence. 
Experimental results also demonstrate that our block circulant adapter has both low storage and computation complexity while achieving competitive downstream task performance. 

\newpage

\appendix

\section*{Acknowledgments}

This work was ﬁnancially supported by the National Key R\&D Program of China (Grant No. 2024YFA1211400).

\section*{Contribution Statement}
Xinyu Ding and Meiqi Wang contributed equally to this work. 
Siyu Liao and Zhongfeng Wang are co-corresponding authors.

\bibliographystyle{named}
\bibliography{ijcai25}

\end{document}